
\documentclass[review, numbers]{elsarticle} 
\usepackage{fullpage}
\usepackage{amsfonts,amsmath,amssymb,amsthm,bm}
\usepackage{booktabs,threeparttable}
\usepackage{graphicx,psfrag,epsf}
\usepackage{enumerate}
\usepackage{url}
\usepackage{algorithm}
\usepackage{algpseudocode}
\usepackage{helvet}
\usepackage[colorlinks=true,pagebackref,linkcolor=magenta]{hyperref}
\usepackage{color} 
\usepackage{lineno}
\usepackage{chngcntr}
\usepackage{thmtools, thm-restate}

\theoremstyle{plain}

\usepackage{adjustbox}
\usepackage{dsfont}

\begin{document}

\begin{frontmatter}
  \title{Explaining Categorical Feature Interactions Using Graph Covariance and LLMs}
  \author[1]{Cencheng Shen\corref{cor1}}
\cortext[cor1]{Corresponding author}
\ead{shenc@udel.edu}
\affiliation[1]{organization={University of Delaware},
postcode={19716},
city={Newark},
country={United States}}

\author[2]{Darren Edge}
\ead{darren.edge@microsoft.com}
\affiliation[2]{organization={Microsoft Research},
postcode={CB1 2FB},
city={Cambridge},
country={United Kingdom}}

\author[3]{Jonathan Larson}
\ead{jolarso@microsoft.com}
\affiliation[3]{organization={Microsoft Research},
postcode={98052},
city={Redmond},
country={United States}}

\author[4]{Carey E. Priebe}
\ead{cep@jhu.edu}
\affiliation[4]{organization={Johns Hopkins University},
postcode={21218},
city={Baltimore},
country={United States}}


\begin{abstract} 
Modern datasets often consist of numerous samples with abundant features and associated timestamps. Analyzing such datasets to uncover underlying events typically requires complex statistical methods and substantial domain expertise. A notable example, and the primary data focus of this paper, is the global synthetic dataset from the Counter Trafficking Data Collaborative (CTDC) — a global hub of human trafficking data containing over 200,000 anonymized records spanning from 2002 to 2022, with numerous categorical features for each record. In this paper, we propose a fast and scalable method for analyzing and extracting significant categorical feature interactions, and querying large language models (LLMs) to generate data-driven insights that explain these interactions. Our approach begins with a binarization step for categorical features using one-hot encoding, followed by the computation of graph covariance at each time. This graph covariance quantifies temporal changes in dependence structures within categorical data and is established as a consistent dependence measure under the Bernoulli distribution. We use this measure to identify significant feature pairs, such as those with the most frequent trends over time or those exhibiting sudden spikes in dependence at specific moments. These extracted feature pairs, along with their timestamps, are subsequently passed to an LLM tasked with generating potential explanations of the underlying events driving these dependence changes. The effectiveness of our method is demonstrated through extensive simulations, and its application to the CTDC dataset reveals meaningful feature pairs and potential data stories underlying the observed feature interactions.
\end{abstract}

\begin{keyword}
Feature Dependence \sep Categorical Data \sep Temporal Data \sep Large Language Models
\end{keyword}

\end{frontmatter}



\section{Introduction}

Modern datasets often contain numerous samples with abundant features (also called attributes or dimensions), each associated with a timestamp. For example, the publicly available Counter-Trafficking Data Collaborative (CTDC) dataset\footnote{\url{https://www.ctdatacollaborative.org/page/global-synthetic-dataset}. All materials of CTDC are copyright of the International Organization for Migration.} - a global hub of human trafficking data compiled from counter-trafficking organizations worldwide - contains over $200,000$ anonymized records spanning 2002 to 2022. Each record represents an individual trafficking case and includes $26$ categorical features, such as gender, age group, citizenship, country of exploitation, and means of exploitation. Most records also include a timestamp indicating the year of registration. While the importance of analyzing human trafficking cases and the CTDC dataset has garnered significant attention \cite{Edge2020, walby2024improving, eryarsoy2024disentangling, AlTammemi2023, Olisah2024}, data exploration in this domain often depends on complex statistical methods, requires substantial manual inspection and visualization, typically addresses only a limited set of features or major events, and is often time-consuming.

This type of dataset can be viewed as temporal categorical data, which are common in practice and a subject of academic interest \cite{Stoffer1993,Kunihama2013,Li2022}. Traditional time-series analysis assumes paired features $(X_1, Y_1)$, $(X_2, Y_2)$, $\ldots$, $(X_T, Y_T)$, where $X_t$ and $Y_t$ are univariate time series indexed by $t$. However, CTDC and many real-world datasets deviate from this framework. Instead, they involve multivariate temporal data of the form $(\mathbf{X}_{iu},\tau_i)$, where $i = 1,\ldots,n$ indexes samples, $u = 1,\ldots,p$ represents the $p$-dimensional categorical features, and $\tau_i \in \{1,\ldots,T\}$ denotes the timestamp of the $i$-th observation. Unlike classical time-series data, multiple observations may share the same timestamp, introducing irregularity in the sampling structure. This broader setting is common in modern data collection, where each sample is tagged with a timestamp rather than observed at regular intervals.

To identify feature interactions in temporal data, traditional time-series techniques, such as cross-covariance and cross-correlation, are typically used to examine linear relationships \cite{enders2010applied,shumway2010time,box2015time}. However, these methods are limited to evenly-spaced time-series data and are designed only to detect linear relationships. Detecting more general forms of dependence requires universal consistent dependence measures. Two landmark approaches in this domain are distance correlation \cite{SzekelyRizzoBakirov2007} and the Hilbert-Schmidt independence criterion \cite{GrettonEtAl2005}, followed by many others \cite{HellerGorfine2013,MGCDCor,Chatterjee2021} and time-series adaptations \cite{Zhou2012, hsicts, DCorTemporalTest}. One key challenge lies in adapting dependence measures to accommodate irregular temporal samples, rather than being restricted to evenly-sampled time-series data. Another important hurdle is the computational burden, as universal consistent dependence measures typically involve $O(n^2)$ time complexity. Recent advancements in simplifying dependence measures for standard categorical data \cite{Edelmann2022, DCorKSampleTest, zhang2024} suggest the potential for more efficient methods specifically for temporal categorical data.

In this paper, we propose a fast and scalable statistical analysis tool for temporal categorical data, combining a graph covariance approach to efficiently detect feature dependence with the power of large language models (LLMs) to generate immediate data insights. Our method transforms categorical features into binary variables via one-hot encoding and computes a graph covariance matrix for feature pairs at each timestamp. We derive a significance level to extract feature pairs with notable interactions over time or sudden spikes at specific timestamps. Extracted features, along with their timestamps, are then passed to an LLM to uncover potential events or contexts driving these
interactions.

Theoretically, we prove that graph covariance is a consistent dependence measure between Bernoulli random variables. It converges asymptotically to zero if and only if two binary features are independent, enabling non-negative values to quantify the strength of their relationship. Computationally, the method is highly efficient, processing the CTDC global synthetic dataset in under $2$ seconds, compared to hours required by other dependence measures. This efficiency facilitates dynamic analysis of large datasets. The integration of LLMs \cite{vaswani2017attention,Devlin2019BERTPO,Raffel2019ExploringTL,radford2019languagegpt2} further enhances the workflow, providing context-rich insights into temporal feature interactions. While LLM explanations may not always be fully accurate and require validation from domain experts, they offer a valuable starting point for practitioners to interpret data narratives with minimal effort.

We validate our method through simulation studies, demonstrating its capability to detect temporal dependence changes under various scenarios. Its application to the CTDC global synthetic dataset showcases the method's efficiency in analyzing large datasets, identifying feature pairs with significant interactions either across all time points or at specific timestamps, and uncovering the underlying events driving these dependence changes. All experiments were conducted on a local desktop running MATLAB 2024a and Python, on Windows 10, with an Intel 16-core CPU and 64GB of memory. A preliminary experimental tool utilizing OpenAI’s API is available on GitHub\footnote{\url{https://github.com/microsoft/intelligence-toolkit}}. The appendix includes theorem proofs, additional simulation figures, and further details on the LLM queries conducted for the CTDC data.

\section{Main Method}

Given a dataset represented as $(\mathbf{X}_{iu}, \tau_i)$, where $i = 1, \ldots, n$ indexes the samples, $u = 1, \ldots, p$ indexes the features, and $\tau_i \in \{1, \ldots, T\}$ denotes the timestamp of the $i$-th sample. Suppose the $u$-th feature has values in the range $[0, K_u]$, where $K_u$ is a positive integer, and the feature is binary if and only if $K_u = 1$. Let $m$ represent the desired number of feature pairs to be extracted and analyzed. The method proceeds as follows:

\begin{itemize}
\item \textbf{Step 1 (Transform Categorical to Binary)}: For each categorical feature $u$ that is not binary, we transform the feature into binary format using one-hot encoding. Specifically, compute $\mathbf{Z}_u \in [0, 1]^{n \times (K_u + 1)}$ as:
\begin{align*}
\mathbf{Z}_{u}(i, k+1)=1 \mbox{ if and only if } \mathbf{X}_{iu}= k
\end{align*}
for each $i$. This transformation ensures that each row of $\mathbf{Z}_u$ contains all zeros except for a single element. After this step, the $u$-th column is removed from $\mathbf{X}$, and $\mathbf{Z}_u$ is appended. This process is repeated for all categorical features, resulting in a transformed $\mathbf{X}$ with $p_1$ binary features, where $p_1 \geq p$. Additionally, a label vector $\mathbf{Y} \in \mathbb{R}^{p_1}$ is created to track the transformation, where $\mathbf{Y}_i = u$ indicates that the $i$-th binary feature originated from the $u$-th categorical feature.
\item \textbf{Step 2 (Compute Temporal Graphs Per Timestamp)}: For each timestamp $t = 1, \ldots, T$, calculate the total number of samples at $t$ as:
\begin{align*}
n_t= \sum_{i=1,\ldots,n} 1(\tau_i=t).
\end{align*}
Next, compute the graph $\mathbf{A}_t$ for each timestamp $t$, defined as:
\begin{align*}
\mathbf{A}_{t}(u,v)= \frac{\sum_{i=1,\ldots,n}^{\tau_i=t}1(\mathbf{X}_{iu}=\mathbf{X}_{iv}=1)}{n_t} \in [0,1],
\end{align*}
where $u, v = 1, \ldots, p_1$. In this graph, $\mathbf{A}_t(u, v)$ represents the normalized co-occurrence frequency of binary features $u$ and $v$ at timestamp $t$.
\item \textbf{Step 3 (Compute Graph Covariance)}: For each timestamp $t$, calculate the $p_1 \times p_1$ covariance matrix as:
\begin{align*}
\Sigma_{t}(u,v)= \mathbf{A}_{t}(u,v) - \mathbf{A}_{t}(u,u)\mathbf{A}_{t}(v,v) \in [-1,1]
\end{align*}
for all $(u, v)$ such that $\mathbf{Y}_u \neq \mathbf{Y}_v$. Each $\Sigma_t(u, v)$ quantifies the dependence between features $u$ and $v$ at timestamp $t$.
\item \textbf{Step 4 (Determine $95\%$ Significance Level)}: For each timestamp $t$, compute a significance level $\epsilon_t$ using:
\begin{align*}
0.025 =2 Prob(\mbox{Normal}(0, 0.25) \leq \epsilon_t \sqrt{n_t}).
\end{align*}
This can be efficiently computed using the inverse cumulative distribution function.
\item \textbf{Step 5.1 (Detect Common Interactions)}: For each pair $(u, v)$, calculate:
\begin{align*}
\delta_1(u,v)&= \sum_{t=1}^{T} |\Sigma_{t}(u,v)| 1(|\Sigma_{t}(u,v)| > \epsilon_t) / T
\end{align*}
and identify the timestamps where the interaction is significant, along with the trend information (positive or negative), into the set:
\begin{align*}
t_s(u,v)&=\{t\cdot 1(\Sigma_{t}(u,v) > \epsilon_t) - t  \cdot 1(\Sigma_{t}(u,v) < -\epsilon_t)\}.
\end{align*}
Select the top $m$ pairs with the largest $\delta_1(u, v)$ and store them in the set $\Delta_1 = \{(u, v, t_s(u, v))\}$.
\item \textbf{Step 5.2 (Detect Spiked Interactions)}: For each pair $(u, v)$, calculate:
\begin{align*}
\delta_2(u,v)&= \sum_{t=1}^{T} |\Sigma_{t}(u,v)| 1(|\Sigma_{t}(u,v)| > \epsilon_t) / |t_s(u,v)|.
\end{align*}
Select the top $m$ pairs with the largest $\delta_2(u, v)$ and store them in the set $\Delta_2 = \{(u, v, t_s(u, v))\}$.
\item \textbf{Step 6 (LLM Query on Set Information)}: For each element in $\Delta_1$ and $\Delta_2$, query an LLM with the following details: basic dataset information, the feature pair $(u, v)$ along with their respective textual descriptions, the time range $t_s(u, v)$ during which the pair $(u, v)$ is significant, and the positive or negative trend information. Then, prompt the LLM to identify potential underlying events or contexts that could explain the significant relationship between $u$ and $v$ within the specified time range.
\end{itemize}

\subsection*{Detailed Discussions}
Step 1 employs the one-hot encoding scheme, transforming all categorical features into binary features. After this step, $\mathbf{X}$ contains only binary features, with $n$ samples and $p_1$ binary columns. This transformation preserves the categorical information by expanding each category into its own column, a technique fundamental in neural networks and machine learning \cite{Bishop1995, Murphy2012}. More recently, it has also been applied in hypothesis testing \cite{DCorKSampleTest} and graph embedding \cite{GEEOne}.

In Step 2, $\mathbf{A}_{t}$ measures how often two features are both equal to $1$, forming a graph of size $p_1 \times p_1$. Numerically, this computation is highly efficient due to the binary nature of the features. Specifically, for timestamp $t$, extracting $\mathcal{X}_{(t)} = \{\mathbf{X}_{iu} | \tau_i = t\} \in \mathbb{R}^{n_t \times p_1}$ allows the matrix $\mathbf{A}_{t}$ to be computed using matrix multiplication as:
\begin{align*}
\mathbf{A}_{t}= \mathcal{X}_{(t)}^{T} \mathcal{X}_{(t)} \in [0,1]^{p_1 \times p_1}.
\end{align*}

Step 3 centers the weighted graph computed in Step 2, resulting in the graph covariance matrix $\Sigma_t \in [-1, 1]^{p_1 \times p_1}$. While $\Sigma_t$ can still be interpreted as a weighted graph, it also serves as a quantitative measure of dependence between binary features. Importantly, when both features $u$ and $v$ originate from the same categorical variable, $\Sigma_t(u, v)$ is set to $0$ by default. This adjustment is necessary because one-hot encoding makes features from the same categorical variable mutually exclusive. As a result, computing the graph covariance for such pairs would always yield $\mathbf{A}_t(u, v) = 0$ and $\Sigma_t(u, v)$ as negative. Including these pairs would cause the algorithm to detect spurious negative trends arising solely from the binarization process.

In Step 4, we use a $95\%$ significance level based on the asymptotic null distribution derived in Theorem~\ref{thm1}, which assumes the largest possible variance. This level can be adjusted for different levels of sensitivity. For instance, a $99\%$ significance level would increase $\epsilon_t$, while a $90\%$ significance level would reduce it, allowing for more or less extreme level on the detected interactions.

Step 5 introduces two thresholding schemes to identify significant feature pairs: one for detecting the most common interactions ($\delta_1$) and another for capturing spikes in dependence ($\delta_2$). The distinction lies in the denominator: $\delta_1(u, v)$ averages over all timestamps, while $\delta_2(u, v)$ averages only over the timestamps where the interaction is significant. These two sets are not mutually exclusive; feature pairs exhibiting strong relationships over multiple years can appear in both sets. Additionally, the directional trend information, indicating whether the interaction is positive or negative, is stored in the sets to facilitate the LLM query in Step 6.

Alternative schemes for graph covariance can be designed to meet specific analytical needs, such as detecting exclusively positive or negative trends, identifying transition points where trends change, or highlighting pairs with consistently minimal relationships. For the purpose of this paper and the CTDC dataset, the focus is on the most common interactions and sudden spikes in feature dependence. However, the method's flexibility allows for modifications tailored to specific datasets or user objectives, enabling enriched analysis and deeper insights through LLM-powered interpretation.


\subsection*{Computational Complexity and Method Applicability}

Excluding the LLM query step, the entire algorithm has a computational complexity of $O(np_1)$, which scales linearly with the total number of samples. In contrast, substituting the graph covariance with a universal consistent dependence measure, while potentially producing comparable results, would be computationally prohibitive. Such measures typically require $O(n^2)$ complexity for each feature pair, rendering them impractical for large datasets. Even the fastest algorithm, such as the univariate distance correlation \cite{Hu2018, DCorFastTest}, requires $O(n \log n)$ complexity and falls short in speed compared to the proposed method. Detailed comparisons of actual running times are provided in the simulation and real data sections.

Step 6, which involves querying LLMs, is independent of the dataset size. Generating insights for each extracted feature pair typically requires only a few seconds when using popular LLM engines.

Although this method is designed for temporal categorical data, it can easily be applied to datasets without timestamps by treating all records as occurring at the same time. For datasets with continuous timestamps, users can define time ranges, such as yearly, monthly, or weekly intervals. Additionally, continuous features can be accommodated by discretizing their range into categories, effectively transforming them into categorical features. While these extensions are beyond the scope of this paper, which focuses on temporal categorical data like the CTDC dataset, the method is inherently versatile and applicable to a broader range of data formats.

\section{Mathematical Theory}

In this section, we quantify the proposed graph covariance using Bernoulli random variables and prove that the graph covariance is a consistent statistic for measuring dependence between binary features.

Without loss of generality, we focus on the graph element $\mathbf{A}_{t}(u,v)$ and the graph covariance $\Sigma_{t}(u,v)$ between two binary variables. Suppose $(\mathbf{X}_{iu},\mathbf{X}_{iv}) \stackrel{i.i.d.}{\sim} F_{X_u^{t} X_v^{t}}$ for each $i$ where $\tau_i=t$, with each marginal distribution assumed to be Bernoulli. Specifically, we have:
\begin{align*}
&X_{u_{t}} \sim Bernoulli(Prob(X_{u_{t}}=1)),\\ 
&X_{v_{t}} \sim Bernoulli(Prob(X_{v_{t}}=1)).
\end{align*}
It follows that:
\begin{align*}
1(X_{u_{t}}=X_{v_{t}}=1) \sim Bernoulli(Prob(X_{u_{t}}=X_{v_{t}}=1)).
\end{align*}
The probability can take any value between 0 and 1, depending on the dependence structure between $X_{u_{t}}$ and $X_{v_{t}}$. In the special case where $X_{u_{t}}$ and $X_{v_{t}}$ are independent at time $t$, we have:
\begin{align*}
Prob(X_{u_{t}}=X_{v_{t}}=1) &= Prob(X_{u_{t}}=1) Prob(X_{v_{t}}=1).
\end{align*}
Under this setting, the main theorem (the proof can be found in the appendix) is stated as follows:

\begin{restatable}{theorem}{thmOne}
\label{thm1}
At any given timestamp $t$:
\begin{itemize}
\item The graph covariance satisfies the law of large numbers such that:
\begin{align*}
\Sigma_{t}(u,v)&\stackrel{n_t \rightarrow \infty}{\rightarrow} \mu_{t}
\end{align*}
where 
\begin{align*}
\mu_t&= Prob(X_{u_{t}}=X_{v_{t}}=1) \\
&- Prob(X_{u_{t}}=1)Prob(X_{v_{t}}=1)).
\end{align*}
\item The graph covariance satisfies the central limit theorem such that:
\begin{align*}
\sqrt{n_t}(\Sigma_{t}(u,v)-\mu_{t}) \stackrel{dist}{\rightarrow}  \mbox{Normal}(0, \sigma_t^2)
\end{align*}
where the variance $\sigma_t^2$ satisfies: 
\begin{align*}
\sigma_t^2 \leq \frac{1}{4} + \frac{1}{16\sqrt{n_t}}.
\end{align*}
\item The graph covariance converges to $0$, i.e., $\mu_t = 0$, if and only if $X_{u_t}$ and $X_{v_t}$ are independent. 
\end{itemize}
\end{restatable}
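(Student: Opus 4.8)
The plan is to rewrite the graph covariance as an empirical covariance between Bernoulli indicators and then handle the three bullets with, respectively, the strong law of large numbers together with the continuous mapping theorem, the multivariate central limit theorem together with the delta method, and the elementary fact that uncorrelated binary variables are independent. Restrict attention to the $n_t$ indices $i$ with $\tau_i = t$ and set $U_i = 1(\mathbf{X}_{iu}=1)$, $V_i = 1(\mathbf{X}_{iv}=1)$, and $W_i = 1(\mathbf{X}_{iu}=\mathbf{X}_{iv}=1) = U_iV_i$. Then $\mathbf{A}_t(u,u) = \bar U$, $\mathbf{A}_t(v,v) = \bar V$, and $\mathbf{A}_t(u,v) = \bar W = \overline{UV}$, so $\Sigma_t(u,v) = \overline{UV} - \bar U\,\bar V$ is exactly the (biased) sample covariance of $U$ and $V$. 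Writing $p_u = Prob(X_{u_t}=1)$, $p_v = Prob(X_{v_t}=1)$, $p_{uv} = Prob(X_{u_t}=X_{v_t}=1)$, the first bullet is then immediate: the strong law of large numbers gives $\bar W \to p_{uv}$, $\bar U \to p_u$, $\bar V \to p_v$ almost surely, and the continuous mapping theorem applied to $(w,a,b)\mapsto w - ab$ yields $\Sigma_t(u,v) \to p_{uv} - p_up_v = \mu_t$.

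For the second bullet I would invoke the multivariate CLT, $\sqrt{n_t}\big((\bar W,\bar U,\bar V) - (p_{uv},p_u,p_v)\big) \stackrel{d}{\rightarrow} \mbox{Normal}(\mathbf{0}, C)$, where $C$ is the covariance matrix of $(W_1,U_1,V_1)$; its entries follow from the Bernoulli moments and the identities $W = UV$, $WU = WV = W$ (e.g.\ $\mathrm{Cov}(W,U) = p_{uv}(1-p_u)$ and $\mathrm{Cov}(U,V) = p_{uv}-p_up_v$). The delta method applied to $g(w,a,b) = w - ab$, whose gradient at $(p_{uv},p_u,p_v)$ is $(1,-p_v,-p_u)$, then gives $\sqrt{n_t}(\Sigma_t(u,v)-\mu_t)\stackrel{d}{\rightarrow}\mbox{Normal}(0,\sigma_t^2)$ with
\[
\sigma_t^2 \;=\; (1,-p_v,-p_u)\,C\,(1,-p_v,-p_u)^{\top} \;=\; \mathrm{Var}\big(W - p_uV - p_vU\big).
\]
Equivalently, setting $T := W - p_uV - p_vU$, one can linearize directly and check the identity $\Sigma_t(u,v) - \mu_t = \bar T - E[T] - (\bar U - p_u)(\bar V - p_v)$; then $\sqrt{n_t}(\Sigma_t(u,v)-\mu_t) = \sqrt{n_t}(\bar T - E[T]) - n_t^{-1/2}\,\sqrt{n_t}(\bar U-p_u)\,\sqrt{n_t}(\bar V-p_v)$, whose first term is asymptotically $\mbox{Normal}(0,\mathrm{Var}(T))$ by the ordinary CLT and whose second term is $o_P(1)$ since $\sqrt{n_t}(\bar U-p_u)\,\sqrt{n_t}(\bar V-p_v) = O_P(1)$; Slutsky's theorem finishes the convergence.

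The variance bound is the only step requiring genuine care. The influence function $T = UV - p_uV - p_vU$ takes only the four values $0,\ -p_u,\ -p_v,\ 1-p_u-p_v$ as $(U,V)$ ranges over $\{0,1\}^2$, and all of them lie in the interval $[-\max(p_u,p_v),\ \max(0,\,1-p_u-p_v)]$, whose length equals $1-\min(p_u,p_v)$ when $p_u+p_v\le 1$ and $\max(p_u,p_v)$ otherwise — at most $1$ in either case. Popoviciu's inequality (the variance of a bounded variable is at most a quarter of its squared range) then gives $\sigma_t^2 = \mathrm{Var}(T)\le \tfrac14$. To recover the stated finite-$n_t$ form, I would compute the exact variance of the biased sample covariance $\Sigma_t(u,v)$ — writing $\bar U\bar V = n_t^{-1}\bar W + n_t^{-2}\sum_{i\ne j}U_iV_j$ exhibits it as $\bar W$ minus a degree-two U-statistic — multiply through by $n_t$, separate the leading term $\tfrac{n_t-1}{n_t}\mathrm{Var}(T)\le\tfrac14$, and bound the residual, which is a sum of third- and fourth-order mixed moments of the Bernoulli indicators of order $1/n_t$, by $1/(16\sqrt{n_t})$ (using $1/n_t\le 1/\sqrt{n_t}$). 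This bookkeeping — obtaining the exact finite-sample variance and checking that its lower-order part is controlled by the clean constant $1/16$ — is the main obstacle; everything else is routine.

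For the third bullet, observe that $\mu_t = p_{uv} - p_up_v = \mathrm{Cov}\big(1(X_{u_t}=1),\,1(X_{v_t}=1)\big)$. Independence trivially forces $\mu_t = 0$. Conversely, if $\mu_t = 0$ then $Prob(X_{u_t}=1,X_{v_t}=1) = p_up_v$, and because both variables are binary the three remaining joint probabilities $Prob(X_{u_t}=a,X_{v_t}=b)$, $(a,b)\neq(1,1)$, are determined by the marginals and factor as well, so $X_{u_t}$ and $X_{v_t}$ are independent. Combined with the first bullet, this shows $\Sigma_t(u,v)\to 0 \iff \mu_t = 0 \iff X_{u_t}$ and $X_{v_t}$ are independent, completing all three statements.
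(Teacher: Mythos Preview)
Your argument is correct, and for parts (i) and (iii) it matches the paper essentially verbatim: both apply the law of large numbers componentwise to $\mathbf{A}_t(u,v)$, $\mathbf{A}_t(u,u)$, $\mathbf{A}_t(v,v)$ and then combine, and both check the four cells of the $2\times 2$ joint table to show that $p_{uv}=p_up_v$ forces full factorization.

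For part (ii) you take a genuinely different and more careful route. The paper applies the univariate CLT separately to each of $\mathbf{A}_t(u,v)$, $\mathbf{A}_t(u,u)$, $\mathbf{A}_t(v,v)$ and then bounds the variance of $\Sigma_t(u,v)$ heuristically by summing the Bernoulli variance $p_{uv}(1-p_{uv})\le\tfrac14$ of the first term and a product-of-variances term $p_u(1-p_u)p_v(1-p_v)/\sqrt{n_t}\le\tfrac{1}{16\sqrt{n_t}}$ for the second, without accounting for the covariance between the two pieces; this is where the curious $n_t$-dependence of the ``asymptotic'' variance enters. You instead run the multivariate CLT plus delta method to identify $\sigma_t^2=\mathrm{Var}(UV-p_uV-p_vU)$ exactly, then bound it by $\tfrac14$ via Popoviciu's inequality --- a strictly sharper, $n_t$-free bound that subsumes the paper's. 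Your remark that the extra $1/(16\sqrt{n_t})$ slack can only be reconciled with a finite-sample variance computation is the right diagnosis; the paper's derivation simply does not make this distinction. What your approach buys is a rigorous limit law with a clean constant; what the paper's buys is brevity at the cost of a looser (and somewhat informally justified) bound.
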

Therefore, when the graph covariance deviates significantly from zero, it indicates the presence of dependence, while values close to zero suggest independence between features. The graph covariance quantifies the difference between the joint probability and the product of the marginal probabilities, allowing its magnitude to detect and measure changes in the strength of dependence over time.

Since the mean is $0$ under independence and the variance does not exceed $0.25$ for large $n_t$, this justifies the significance level $\epsilon_t$ defined in the main method. Alternatively, an approximate p-value can be derived based on $\text{Normal}(0, 0.25)$ as:
\begin{align*}
\mbox{p-val}=2 Prob(\sqrt{n_t}\Sigma_{t}(u,v) > \mbox{Normal}(0, 0.25)).
\end{align*}
Any graph covariance with a p-value exceeding the type 1 error level, e.g., $0.05$, can be considered insignificant, indicating that the two features are likely independent.

It is worth noting that the variance bound is conservative, as the maximum variance occurs when the Bernoulli probability equals $0.5$. In practice, the actual variance is typically smaller, making the level $\epsilon_t$ generally conservative as well.

\section{Simulation Study}
In this section, we present a series of simulations to demonstrate the ability of the proposed graph covariance method to detect dependence between features and capture dynamic interactions for both stable and shifting relationships over time.

We considered $t = 1, 2, \ldots, 8$ time steps, with $n_t = 5000$ samples generated for each $t$. Eight binary features were simulated, each exhibiting varying degrees of dependence on a reference feature, $X_{1_t}$. The generation process for these features is defined as follows:
\begin{itemize}
\item $X_{1_t} \sim Bernoulli(0.2)$ for all $t$, serving as the baseline feature for comparison.
\item $X_{2_t} \sim Bernoulli(0.1)$ for all $t$, remaining independent of $X_{1_t}$ across all time points.
\item $X_{3_t} \sim Bernoulli(0.9) * X_{1_t}$ for all $t$, exhibiting strong dependence on $X_{1_t}$ throughout time.
\item $X_{4_t} \sim Bernoulli(0.1 + 0.1 t) * X_{1_t}$ for all $t$, with increasing dependence on $X_{1_t}$ as time progresses.
\item $X_{5_t} \sim Bernoulli(0.9 - 0.1 t) * X_{1_t}$ for all $t$, showing decreasing dependence on $X_{1_t}$ over time.
\item $X_{6_t} \sim Bernoulli(0.1 + 0.2 (t-4)) * X_{1_t}$ for all $t$, demonstrating shifting dependence, decreasing from $t=1$ to $4$ and increasing from $t=4$ to $8$.
\item $X_{7_t} \sim \text{Bernoulli}(0.1)$ for $t < 8$, and $X_{7_t} \sim \text{Bernoulli}(0.9) \cdot X_{1_t}$ for $t = 8$, remaining independent of $X_{1_t}$ until a dependence spike at $t=8$.
\end{itemize}
These configurations simulate various scenarios where pairwise feature dynamics change over time, providing a comprehensive test of the graph covariance method’s capability to capture different types of feature interactions.

Figure~\ref{fig1} illustrates the behavior of the graph covariance statistic $\Sigma_t(1, u)$ for each pair of features relative to the baseline $X_{1_t}$, alongside the $95\%$ significance level. The results confirm the effectiveness of the proposed method in capturing intended dependence patterns:
\begin{itemize}
\item Independence: $\Sigma_t(1, 2)$ remains near zero across all time points, validating the independence between $X_{1_t}$ and $X_{2_t}$. The values consistently fall within the significance level, ensuring this pair is not extracted as either a common or spiked feature pair.
\item Stable Dependence: $\Sigma_t(1, 3)$ is positive and stable over time, reflecting the strong dependence between $X_{1_t}$ and $X_{3_t}$. The values remain consistently above the significance level.
\item Increasing Dependence: $\Sigma_t(1, 4)$ steadily increases with time, accurately capturing the growing dependence between $X_{1_t}$ and $X_{4_t}$.
\item Decreasing Dependence: $\Sigma_t(1, 5)$ gradually decreases over time, reflecting the weakening dependence between $X_{1_t}$ and $X_{5_t}$.
\item Shifting Dependence: $\Sigma_t(1, 6)$ first decreases from $t=1$ to $4$, then increases from $t=4$ to $8$, matching the intended shifting dependence between $X_{1_t}$ and $X_{6_t}$.
\item Dependence Spike: $\Sigma_t(1, 7)$ stays near zero and insignificant until $t=8$, where it shows a sharp spike above the significance level, capturing the sudden increase in dependence between $X_{1_t}$ and $X_{7_t}$.
\end{itemize}
These results demonstrate the graph covariance’s effectiveness in detecting various dependence patterns over time. If we apply Step 5.1 and Step 5.2, the feature pair with the most common interaction is $(1, 3)$, while the pair with the strongest spike interaction is $(1, 7)$.

In addition to accurately detecting temporal dependence shifts, the graph covariance method is highly computationally efficient. For this simulation, computing all graph covariances takes just $0.01$ seconds. By comparison, other dependence measures, such as distance correlation and the Hilbert-Schmidt independence criterion, produce similar results (see Figure~\ref{fig4} in the appendix) but require more time. The $O(n \log n)$ distance correlation takes $6$ seconds to compute all pairwise correlations across the $8$ time steps, while the $O(n^2)$ Hilbert-Schmidt independence criterion requires over two minutes.


Overall, the simulation results highlight the effectiveness and computational efficiency of the graph covariance method in detecting and quantifying changes in dependence between binary features over time. The proposed method successfully captures a variety of dependence structures - including increasing, decreasing, shifting, and spike relationships - with minimal computational burden.

\begin{figure}[htbp]
	\centering
	\includegraphics[width=0.95\textwidth,trim={1cm 0cm 1cm 0cm},clip]{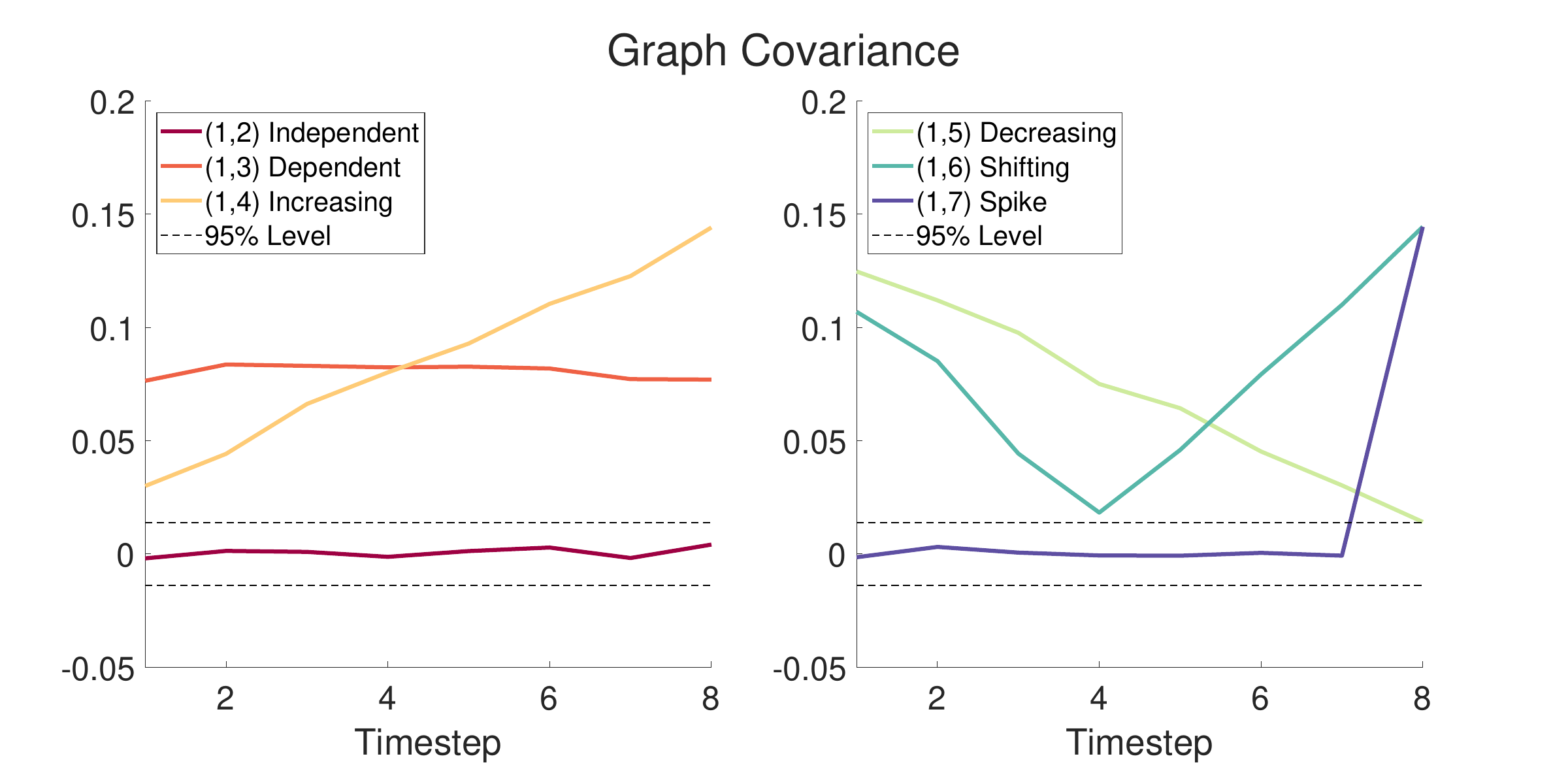}
	\caption{This figure shows the graph covariance between the baseline feature and the other simulated binary features across all timestamps.}
	\label{fig1}
\end{figure}

\section{Exploring the CTDC Dataset}

We applied the proposed method to the Counter-Trafficking Data Collaborative (CTDC) global synthetic dataset. After excluding approximately $30,000$ records that lacked a recorded year, the dataset comprised $n = 206,582$ records, each containing $p = 26$ categorical features along with the year of registration, and missing features (NaN) for samples were simply set to 0.

All categorical features were transformed into binary format using one-hot encoding, resulting in $p_1 = 158$ binary features. The graph covariance matrix, $\Sigma_t \in [-1, 1]^{158 \times 158}$, was computed for each year from 2002 to 2022, spanning $T = 21$ time steps. Using $m = 10$ feature pairs for both $\Delta_1$ (most common interactions) and $\Delta_2$ (spiked interactions), we identified the top significant pairs. These feature pairs were then passed to an LLM to generate detailed data insights, including information about the feature pairs, their textual descriptions, and their time-specific significance trends (positive or negative). The entire computation, excluding the LLM query, was completed in under 2 seconds.

\subsection{Computing Graph Covariance}
Figure~\ref{fig2} presents the graph covariance matrices $\Sigma_t$ for the years 2003, 2009, 2015, and 2021, illustrating the dependence structures that enable the extraction of significant feature pairs. While certain patterns remain stable, noticeable variations in the dependence structures are observed across different years, particularly between earlier and later time frames. These visualizations emphasize the evolving relationships among features over time. Note that entries between binary pairs within the same categorical variable are set to $0$, resulting in a substantial number of zero entries in the matrices. 

While the proposed method completes its computations in under two seconds, alternative dependence measures are considerably slower. For instance, fast distance correlation requires hours for this dataset due to the need to compute pairwise correlations for thousands of binary pairs, with each pair taking over $5$ seconds. Other dependence measures would take days to complete. Even with parallel computation, these methods remain significantly slower than the proposed graph covariance approach.

\begin{figure}[htbp]
	\centering
	\includegraphics[width=0.95\textwidth,trim={3cm 0cm 1cm 0cm},clip]{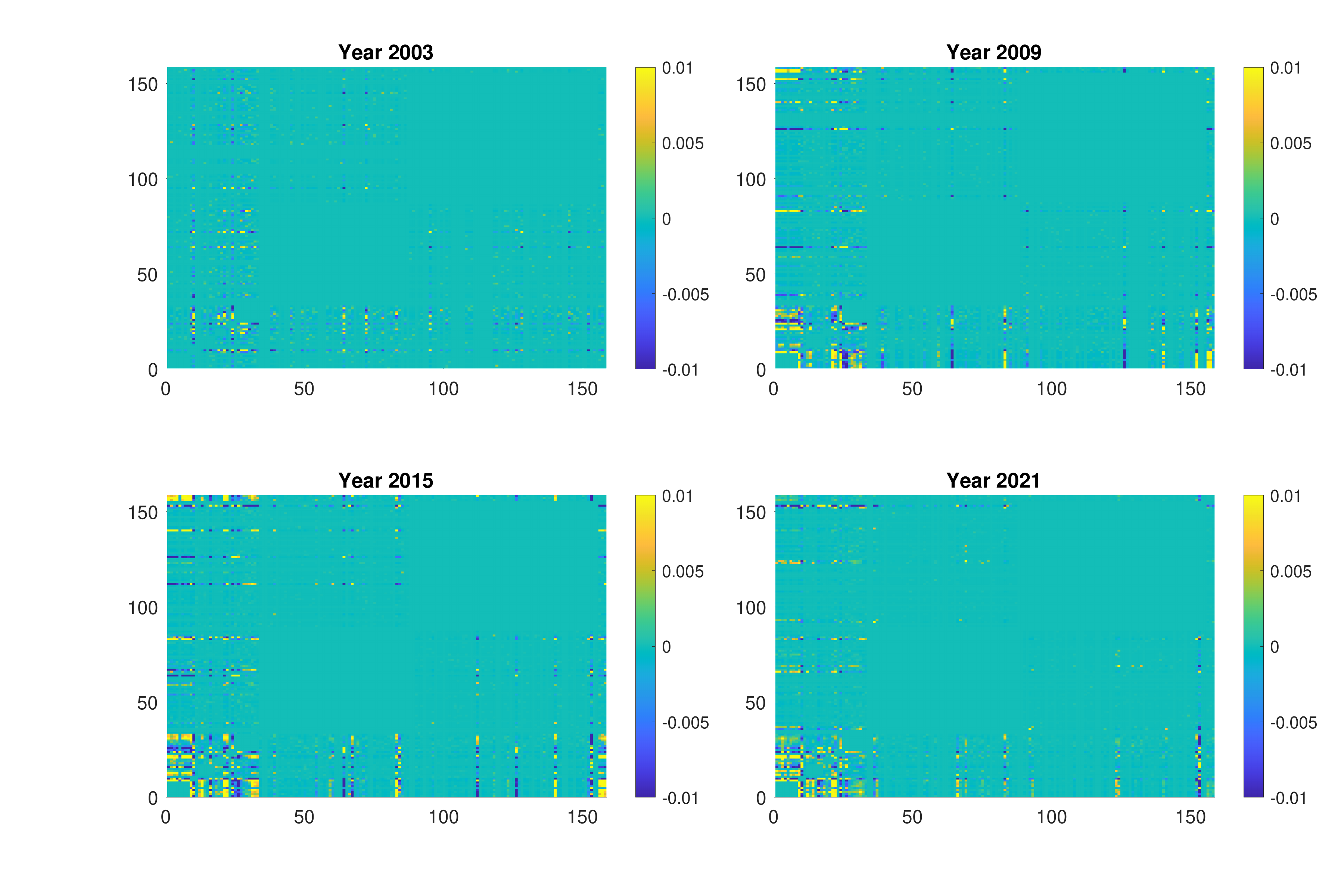}
	\caption{This figure visualizes the graph covariance matrix for the years 2003, 2009, 2015, and 2021 from the CTDC data.}
	\label{fig2}
\end{figure}

\subsection{Extracting Significant Feature Pairs}
Table~\ref{table1} presents the top 10 feature pairs in $\Delta_1$, representing the most significant and persistent trends, while Table~\ref{table2} showcases the top 10 pairs in $\Delta_2$, capturing spiked interactions. Although there is some overlap between the sets, $\Delta_1$ predominantly includes pairs significant over multiple years, whereas $\Delta_2$ includes several noteworthy spiked dependencies. Most relationships exhibit positive trends, with a few negative trends indicated by a negative sign preceding the years.

Most feature pairs in $\Delta_1$ align with widely recognized patterns in human trafficking. For example, the top pair (isSexualExploit, Woman) is significant across all years, reflecting the prevalence of sexual exploitation among women. Similarly, (meansDenyBasicNeeds, isForcedLabour) highlights the frequent association of forced labor with the denial of basic needs. In contrast, (isSexualExploit, Man) is significantly negative, fitting the gendered nature of sexual exploitation, while (isForcedLabour, Man) shows a positive and significant relationship.

Figure~\ref{fig3} visualizes graph covariances for selected feature pairs from $\Delta_1$ and $\Delta_2$, as well as additional pairs, compared against the $95\%$ significance level (varying due to sample size differences by year). The top-left panel shows the positive trend for (isForcedLabour, Man), consistently above the threshold in most years. The top-center panel highlights (isSexualExploit, Woman), with strong significance except for a small dip around 2008–2014. The top-right panel confirms the strong relationship between (isForcedLabour, recruiterRelationOther), unaffected by gender.

The bottom-left panel highlights notable country pairs, including $(MDA, MDA)$ from $\Delta_1$, consistently significant until 2017, and $(GHA, GHA)$ and $(MMR, IDN)$ from $\Delta_2$, which exhibit sudden spikes in 2005 and 2015, respectively. The bottom-center and bottom-right panels examine additional pairs, such as $(Woman, UKR)$, which is slightly significant around 2008, and $(Woman, USA)$, slightly significant in 2015. Minimal relationships are observed between features such as means of false promise and citizenship, providing an example of independent feature pairs.


\begin{table}[h]
\renewcommand{\arraystretch}{1.3}
\centering
\small
{\begin{tabular}{|c|c|}
 \hline
 Pairs & Years\\
\hline
(isSexualExploit, Woman) & 2002-2022 \\
(MDA,MDA) & 2004-2017\\
(isForcedLabour, recruiterRelationOther) & 2006-2022\\
(isSexualExploit, USA) & 2015-2022\\
(UKR, UKR) & 2007-2022\\
(isSexualExploit, typeOfSexProstitution) & 2015-2022\\
(meansDenyBasicNeeds,isForcedLabour) & 2008-2022\\
(meansDebtBondageEarnings,isForcedLabour) & 2009-2022\\
(isSexualExploit, Man) & -2005-2022\\
(isForcedLabour, Man) & 2005-2022\\
\hline
\end{tabular}
\caption{This table presents the top 10 feature pairs in the set $\Delta_1$ along with their timestamps of significance (for clarity, single insignificant years within a range are omitted). For country-related pairs, "MDA" represents Moldova, and "UKR" represents Ukraine. In these pairs, the first country corresponds to the categorical attribute Citizenship, while the second country corresponds to the categorical attribute Country of Exploitation. }
\label{table1}
}
\end{table}

\begin{table}[h]
\renewcommand{\arraystretch}{1.3}
\centering
\small
{\begin{tabular}{|c|c|}
 \hline
 Pairs & Year\\
\hline
(PHL,USA) & -2016\\
(isOtherExploit, PHL) & 2016\\
(isSexualExploit, USA) & 2015-2022\\
(isSexualExploit, PHL) & -2016\\
(isSexualExploit, typeOfSexProstitution) & 2015-2022\\
(isSexualExploit, Woman) & 2002-2022 \\
(GHA,GHA) & 2005\\
(MDA,MDA) & 2004-2017\\
(MMR,IDN) & 2015\\
(isForcedLabour, GHA) & 2005\\
\hline
\end{tabular}
\caption{This table lists the top 10 feature pairs in the set $\Delta_1$. For country-related pairs, "PHL" represents the Philippines, "GHA" represents Ghana, while "MMR" and "IDN" represent Myanmar and Indonesia, respectively.}
\label{table2}
}
\end{table}

\begin{figure*}[htbp]
	\centering
	\includegraphics[width=0.85\textwidth,trim={1cm 0cm 1cm 0cm},clip]{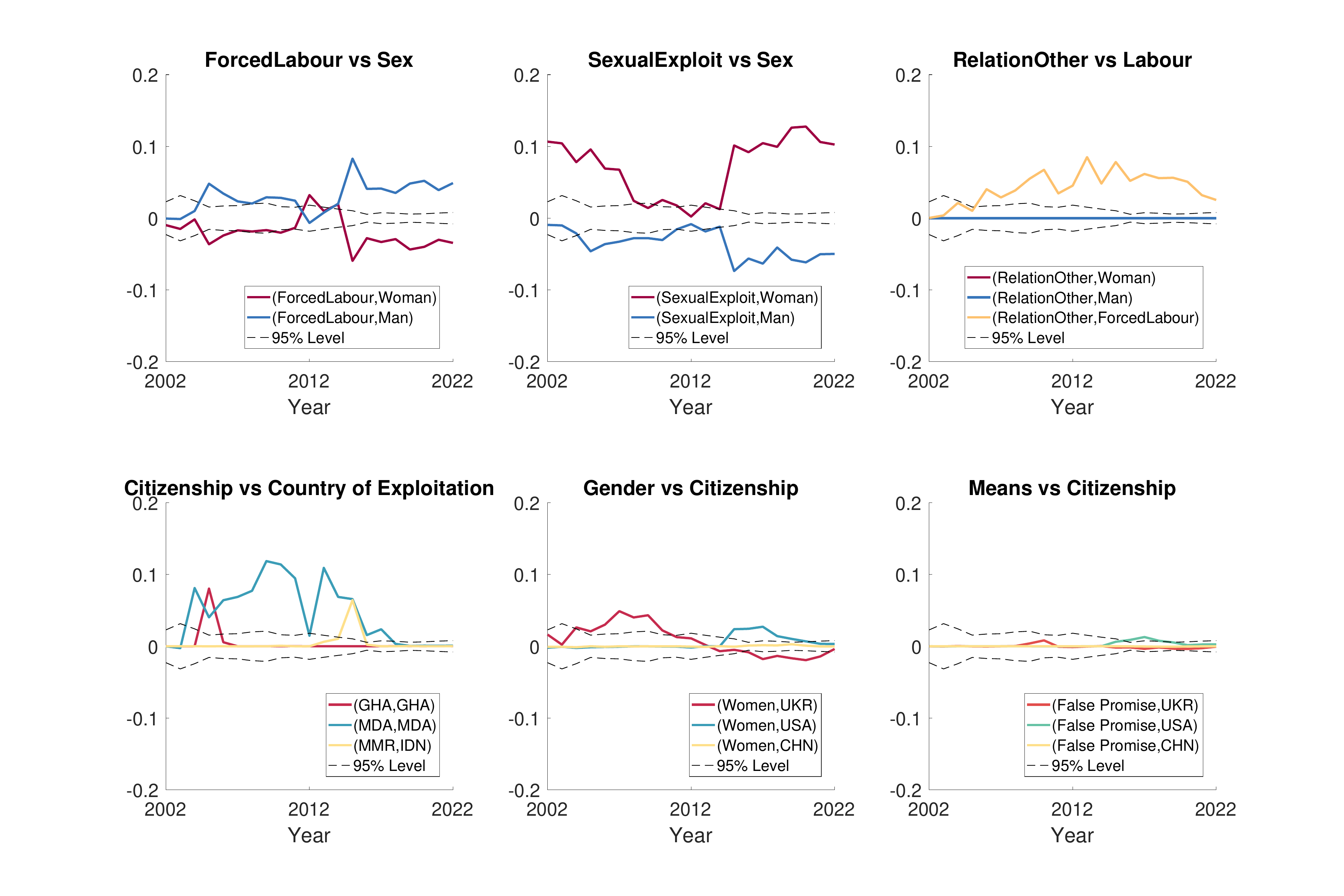}
	\caption{This figure visualizes the graph covariance and its changes over time for several noteworthy pairs of features.}
	\label{fig3}
\end{figure*}


\subsection{Explain Interactions using LLM Insights}

While the graph covariance and feature extraction yield intriguing feature pairs and corresponding timestamps, and the visualizations in Figure~\ref{fig3} are valuable for further analysis, understanding and explaining the observed interactions across years often requires significant effort to investigate underlying events, whether global or local. Some pairs, especially those in $\Delta_1$, are relatively straightforward to interpret. However, pairs in $\Delta_2$, due to their spiked nature, are more challenging to explain without historical context. These spikes could stem from data collection anomalies, such as a surge in records during a particular year, or from local or geopolitical events.

Leveraging LLMs, our tool generates immediate data insights for any feature pair, whether from $\Delta_1$, $\Delta_2$, or user-specified queries. For example, we queried the following into an LLM: "In the Counter-Trafficking Data Collaborative dataset, \textbf{after year 2015}, there are \textbf{more} reported cases based on \textbf{SexualExploit} and \textbf{USA Country of Exploitation}. Please analyze the potential causes, global and local events, and geopolitical factors that may explain this trend." This query is straightforward and can be customized by replacing the bolded text with any other feature pair. The LLM typically produces a detailed analysis spanning 1–2 pages, often concluding with a summary. Here, we report only the final conclusions generated with a GPT-4 LLM instance:

\textit{"The rise in reported cases of Sexual Exploitation and USA as the Country of Exploitation after 2015 can be attributed to a combination of factors: Increased awareness and reporting of trafficking cases post-2015; Strengthened laws and victim protections in the U.S.; the role of the U.S. as a major destination for trafficking, particularly for sexual exploitation; increased global migration and refugee crises after 2015, which put vulnerable populations at greater risk; Advances in international cooperation and data-sharing efforts. "}

As another example, we queried the negative interaction between (PHL, USA) in 2016 using the following prompt: "In the Counter-Trafficking Data Collaborative dataset, \textbf{at year 2016}, there are \textbf{less} reported cases based on \textbf{Philippines Citizen} and \textbf{USA Country of Exploitation}. Please analyze the potential causes, global and local events, and geopolitical factors that may explain this trend." A brief summary of key events provided by ChatGPT-4 includes:

\textit{"The reduction in reported trafficking cases involving Philippine citizens and the U.S. as a country of exploitation in 2016 can likely be attributed to a combination of factors: Stronger migration controls, tighter visa regulations, and increased law enforcement efforts in both the Philippines and the U.S. likely curtailed trafficking flows to the U.S. Diversified migration destinations and regional geopolitical shifts, such as the increased demand for labor in the Middle East and Southeast Asia, may have reduced the flow of Filipino workers to the U.S. Increased awareness, improved victim identification, and better anti-trafficking measures in both the Philippines and the U.S. helped to address trafficking cases before they reached the U.S. Changes in data reporting or a possible underreporting of cases may have also played a role in this trend."}

Additional output summary on a few interesting feature pairs are included in the appendix. 

\section{Conclusion}
In this paper, we proposed a fast and scalable method for statistical analysis and data insight generation using large language models (LLMs) for large-scale categorical datasets with timestamps, focusing on the CTDC global synthetic dataset as a case study. The method's primary strengths lie in its scalability, computational efficiency, and ability to construct correlation matrices, detect trends, and identify outliers in temporal data. By integrating graph covariance with LLM queries, the approach enables actionable insights by uncovering potential events and trends from dependence shifts, facilitating real-time analysis and interpretation of complex datasets.

Theoretical analysis established that graph covariance is a consistent measure of dependence between binary random variables, effectively quantifying the strength of dependence through probability differences. Simulation studies validated its accuracy in capturing dynamic dependence patterns, and the application to the CTDC dataset demonstrated its capability to deliver rapid, interpretable analysis. By combining graph covariance results with LLM-generated insights, the method sheds light on socio-political and economic factors driving observed trends, supporting enhanced decision-making and providing a richer understanding of complex interactions.

Looking ahead, several avenues for future work present exciting opportunities. Theoretical extensions could explore the relationship between graph covariance and other dependence measures, as well as its adaptation to continuous features by discretizing ranges into quantiles. Methodological advancements might include generalizing the approach to handle triplet interactions, offering richer insights and improving the accuracy of LLM interpretations. For example, in the CTDC case study, significant triplets such as (PHL, USA, isSexualExploit) in 2016 and (isForcedLabour, GHA, GHA) in 2005 can be manually identified. Incorporating additional features or contextual information into LLM queries can further improve the precision and relevance of LLM outputs. Such extensions present promising opportunities to enhance the method's versatility and effectiveness in uncovering data insights from complex datasets.


\section*{Acknowledgment}
This work was supported in part the National Science Foundation DMS-2113099, and by funding from Microsoft Research.

\bibliographystyle{abbrvnat} 
\bibliography{shen,general}


\clearpage
\appendix
\setcounter{figure}{0}
\renewcommand{\thealgorithm}{C\arabic{algorithm}}
\renewcommand{\thefigure}{E\arabic{figure}}
\renewcommand{\thesubsection}{\thesection.\arabic{subsection}}
\renewcommand{\thesubsubsection}{\thesubsection.\arabic{subsubsection}}
\pagenumbering{arabic}
\renewcommand{\thepage}{\arabic{page}}

\bigskip
\begin{center}
{\large\bf APPENDIX}
\end{center}
\section{Proofs}
\label{sec:proofs}

\thmOne*
\begin{proof}
(i): Based on basic probability and the law of large numbers, the normalized graph element $\mathbf{A}_t(u,v)$ satisfies:
\begin{align*}
\mathbf{A}_t (u,v) &\stackrel{n_t \rightarrow \infty}{\rightarrow} Prob(X_{u_{t}}=X_{v_{t}}=1). 
\end{align*}
Moreover, we have
\begin{align*}
\mathbf{A}_t (u,u) &\stackrel{n_t \rightarrow \infty}{\rightarrow} Prob(X_{u_{t}}=1), \\
\mathbf{A}_t (v,v) &\stackrel{n_t \rightarrow \infty}{\rightarrow} Prob(X_{v_{t}}=1).
\end{align*}
It follows that:
\begin{align*}
\Sigma_{t}(u,v) &= (\mathbf{A}_t (u,v)- \mathbf{A}_t (u,u) \mathbf{A}_t (v,v)) \\ 
&\rightarrow  Prob(X_{u_{t}}=X_{v_{t}}=1) - Prob(X_{u_{t}}=1)Prob(X_{v_{t}}=1) \\
&=  \mu_t.
\end{align*}

(ii): Applying the central limit theorem, it is clear that 
\begin{align*}
\sqrt{n_t} (\mathbf{A}_{t}(u,v)-Prob(X_{u_{t}}=X_{v_{t}}=1)) \stackrel{dist}{\rightarrow} \mbox{Normal}(0, Prob(X_{u_{t}}=X_{v_{t}}=1) (1-Prob(X_{u_{t}}=X_{v_{t}}=1))).
\end{align*}
Similarly, we have 
\begin{align*}
&\sqrt{n_t} (\mathbf{A}_{t}(u,u)-Prob(X_{u_{t}}=1)) \stackrel{dist}{\rightarrow} \mbox{Normal}(0, Prob(X_{u_{t}}=1)(1-Prob(X_{u_{t}}=1))),\\
&\sqrt{n_t} (\mathbf{A}_{t}(v,v)-Prob(X_{v_{t}}=1)) \stackrel{dist}{\rightarrow} \mbox{Normal}(0, Prob(X_{v_{t}}=1)(1-Prob(X_{v_{t}}=1))).
\end{align*}
It follows that:
\begin{align*}
\sqrt{n_t}(\Sigma_{t}(u,v)-\mu_{t}) = \sqrt{n_t}(\mathbf{A}_t (u,v)- \mathbf{A}_t (u,u) \mathbf{A}_t (v,v) -\mu_{t})\stackrel{dist}{\rightarrow}  \mbox{Normal}(0, \sigma_t^2).
\end{align*}
where 
\begin{align*}
\sigma_t^2 &\leq Prob(X_{u_{t}}=X_{v_{t}}=1) (1-Prob(X_{u_{t}}=X_{v_{t}}=1)) \\
&+ Prob(X_{u_{t}}=1)(1-Prob(X_{u_{t}}=1))Prob(X_{v_{t}}=1)(1-Prob(X_{v_{t}}=1)) / \sqrt{n_t}\\
&\leq \frac{1}{4} + \frac{1}{16\sqrt{n_t}}
\end{align*}

(iii): It suffices to prove that:
\begin{align*}
Prob(X_{u_{t}}=X_{v_{t}}=1) = Prob(X_{u_{t}}=1)Prob(X_{v_{t}}=1)
\end{align*}
if and only if $X_{u_{t}}$ and $X_{v_{t}}$ are independent. 

The if direction is trivial. To prove the only if direction, since $X_{u_{t}}$ and $X_{v_{t}}$ are binary, their independence is equivalent to proving the following four equalities: 
\begin{align*}
Prob(X_{u_{t}}=X_{v_{t}}=1) = Prob(X_{u_{t}}=1)Prob(X_{v_{t}}=1)\\
Prob(X_{u_{t}}=0, X_{v_{t}}=1) = Prob(X_{u_{t}}=0)Prob(X_{v_{t}}=1)\\
Prob(X_{u_{t}}=1, X_{v_{t}}=0) = Prob(X_{u_{t}}=1)Prob(X_{v_{t}}=0)\\
Prob(X_{u_{t}}=X_{v_{t}}=0) = Prob(X_{u_{t}}=0)Prob(X_{v_{t}}=0).
\end{align*}
The first equality is given and holds by assumption. The second equality holds because:
\begin{align*}
&\mbox{Prob}(X_{u_{t}}=0, X_{v_{t}}=1) \\
&= \mbox{Prob}(X_{v_{t}}=1) - \mbox{Prob}(X_{u_{t}}=X_{v_{t}}=1)\\
&=\mbox{Prob}(X_{v_{t}}=1) - \mbox{Prob}(X_{u_{t}}=1) \mbox{Prob}(X_{v_{t}}=1) \\
&=\mbox{Prob}(X_{v_{t}}=1) (1-\mbox{Prob}(X_{u_{t}}=1))\\
&=\mbox{Prob}(X_{u_{t}}=0) \mbox{Prob}(X_{v_{t}}=1).
\end{align*}
By the same argument, but switching $X_{u_{t}}$ and $X_{v_{t}}$, we obtain the third equality. The fourth equality follows from:
\begin{align*}
&\mbox{Prob}(X_{u_{t}}=X_{v_{t}}=0) \\
&= \mbox{Prob}(X_{v_{t}}=0) - \mbox{Prob}(X_{u_{t}}=0, X_{v_{t}}=0)\\
&=\mbox{Prob}(X_{v_{t}}=0) - \mbox{Prob}(X_{u_{t}}=1) \mbox{Prob}(X_{v_{t}}=0) \\
&=\mbox{Prob}(X_{v_{t}}=0) (1-\mbox{Prob}(X_{u_{t}}=1))\\
&=\mbox{Prob}(X_{u_{t}}=0) \mbox{Prob}(X_{v_{t}}=0).
\end{align*} 
Thus, we have proven the only if direction. 
\end{proof}

\section{Additional Simulations}
\label{sec:addSims}

Figure~\ref{fig4} replicates the experiment from Figure~\ref{fig1}, using distance correlation and the Hilbert-Schmidt Independence Criterion (HSIC). While these methods produce quantitatively similar results to the graph covariance shown in Figure~\ref{fig1}, they are significantly slower in computation time: $0.01$ seconds for graph covariance, $6$ seconds for distance correlation, and $2$ minutes for HSIC.

\begin{figure*}[htbp]
	\centering
	\includegraphics[width=0.8\textwidth,trim={1cm 0cm 1cm 0cm},clip]{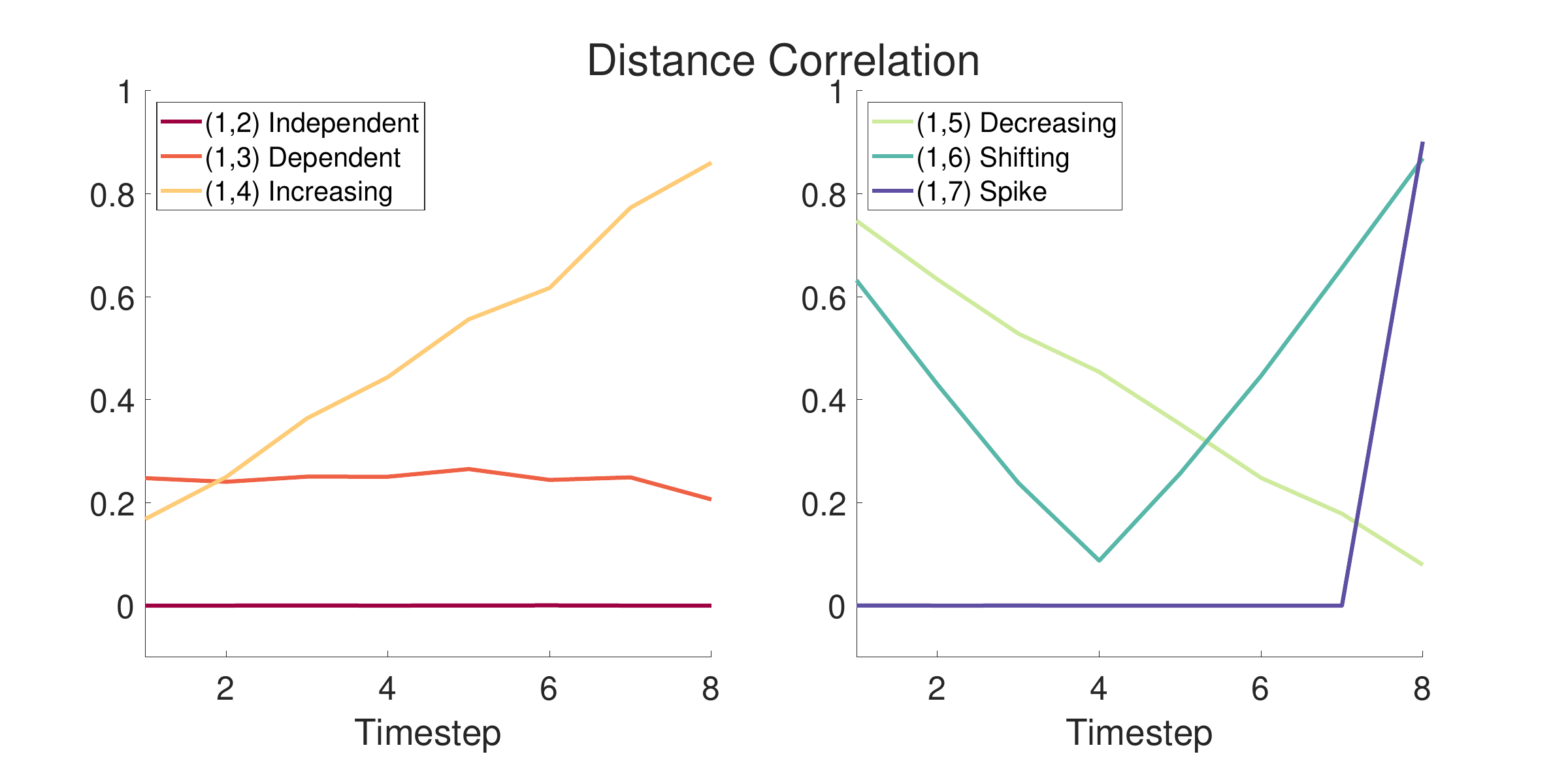}
    \includegraphics[width=0.8\textwidth,trim={1cm 0cm 1cm 0cm},clip]{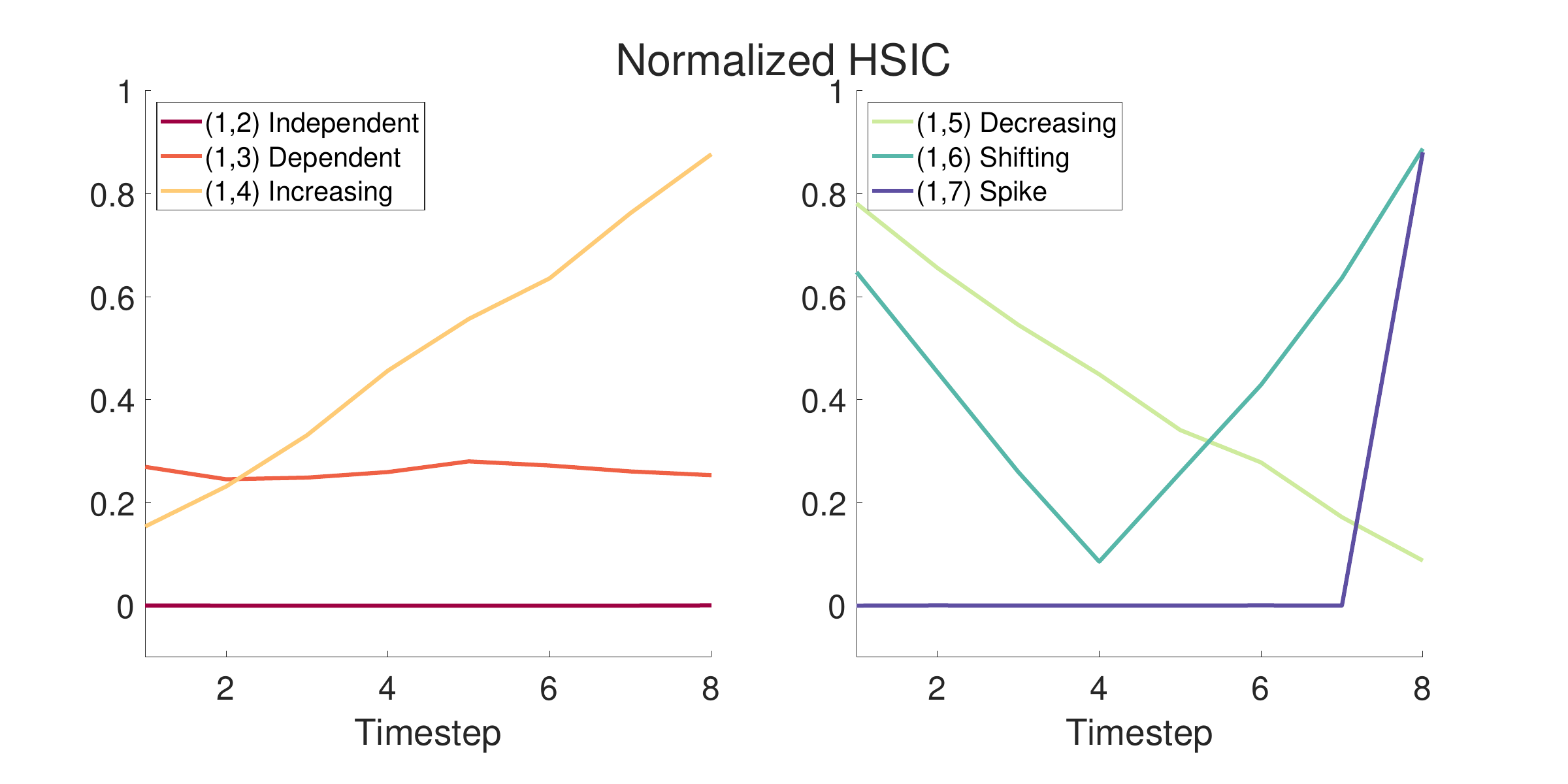}
	\caption{Same experiments as Figure~\ref{fig1} but using distance correlation (top two panels) and Hilbert-Schmidt independence criterion (bottom two panels).}
	\label{fig4}
\end{figure*}

\section{LLM Query Details on CTDC Data}

Here, we present a few LLM-generated data insights for additional feature pairs not included in the main paper. Although any feature pair can be queried, and the LLM outputs are far more detailed than the summaries provided here, these examples are included for illustrative purposes.

\subsection{LLM for Feature Pair (MMR, IDN)}
Query: "In the Counter-Trafficking Data Collaborative dataset, \textbf{at year 2015}, there are \textbf{more} reported cases based on \textbf{Myanmar Citizen} and \textbf{Indonesia  Country of Exploitation}. Please analyze the potential causes, global and local events, and geopolitical factors that may explain this trend."\\

\noindent
Output Summary: \textit{"The increase in reported trafficking cases involving Myanmar citizens and Indonesia as the country of exploitation in 2015 is likely due to a combination of local, regional, and global factors, including: Myanmar’s political and economic instability, which led to increased migration and vulnerability to trafficking. Indonesia’s demand for migrant labor, particularly in the fishing industry, domestic work, and agriculture, where exploitation was rampant. The Rohingya crisis exacerbating migration flows from Myanmar, increasing vulnerability to trafficking. The global focus on human trafficking and increased media attention to exploitative industries in Indonesia, such as fishing. Improved victim identification and better reporting mechanisms in the region during this period. These factors, combined with regional migration patterns and geopolitical dynamics, likely contributed to the higher number of trafficking cases reported in the CTDC dataset for 2015."}

\subsection{LLM for Feature Pair (isForcedLabour, GHA)}

Query: "In the Counter-Trafficking Data Collaborative dataset, \textbf{at year 2005}, there are \textbf{more} reported cases based on \textbf{forced Labour} and \textbf{Ghana Country of Exploitation}. Please analyze the potential causes, global and local events, and geopolitical factors that may explain this trend."\\

\noindent
Output Summary: \textit{"The rise in reported cases of forced labour in Ghana in 2005 can be attributed to a combination of local, regional, and global factors: Local vulnerabilities, particularly in the cocoa industry, where child labour and trafficking were significant issues. Regional migration and trafficking patterns, with people being moved across West Africa and exploited for labour in Ghana. International efforts to combat forced labour and human trafficking, resulting in increased awareness and improved data reporting mechanisms. Geopolitical shifts, such as the attention to Sub-Saharan Africa in international human rights discourses, and Ghana's involvement in global anti-trafficking initiatives. Together, these factors likely contributed to the noticeable increase in reported forced labour cases in Ghana in 2005."}

\subsection{LLM for Feature Pair (women, UKR)}

While the feature pair (Woman, UKR) is not included in $\Delta_1$ or $\Delta_2$, it represents an intriguing case from the bottom center panel of Figure~\ref{fig3}, showing a positive relationship before 2010 and a negative relationship after 2015. A two-part LLM query can be constructed to investigate the underlying causes, as follows:\\

\noindent
Query: "In the Counter-Trafficking Data Collaborative dataset, \textbf{before 2010}, there are \textbf{more} reported cases based on \textbf{women} and \textbf{Ukrain Country of Exploitation}. However, \textbf{after 2015}, there are \textbf{less} reported cases based on \textbf{women} and \textbf{Ukrain Country of Exploitation}. Please analyze the potential causes, global and local events, and geopolitical factors that may explain this trend."\\

\noindent
Output Summary: \textit{"Before 2010, higher reported cases of trafficking involving women in Ukraine as a country of exploitation were driven by economic instability, high unemployment, and widespread poverty following the collapse of the Soviet Union. Weak anti-trafficking laws, coupled with active post-Soviet organized crime networks, made Ukraine a hub for trafficking, particularly for sexual exploitation. Women seeking better opportunities were targeted under false promises of work abroad, while patriarchal norms and systemic gender inequality exacerbated their vulnerabilities. Increased international attention and reporting by NGOs further highlighted the scale of the issue during this period."}\\

\noindent
Output Summary: \textit{"After 2015, reported cases declined due to several factors, including geopolitical changes such as the Euromaidan revolution, the annexation of Crimea, and the conflict in eastern Ukraine, which altered trafficking routes and displaced populations. Ukraine’s closer alignment with the EU led to stronger border controls, improved anti-trafficking legislation, and disrupted trafficking networks. However, the ongoing conflict and shifting reporting mechanisms may have masked the full extent of trafficking, as victims from Ukraine were increasingly exploited abroad rather than domestically. Economic stabilization efforts and international aid also contributed to reducing vulnerabilities during this period."}

\end{document}